\newcommand{\stkout}[1]{\ifmmode\text{\sout{\ensuremath{#1}}}\else\sout{#1}\fi}
\newtheorem{theorem}{Theorem}
\newtheorem{definition}{Definition}
\newtheorem{proposition}{Proposition}
\newtheorem{lemma}{Lemma}
\begin{document}

\title{Decentralized Ergodic Control: Distribution-Driven Sensing and Exploration for Multi-Agent Systems}

\author{Ian Abraham and Todd D. Murphey
\thanks{This material is based upon work supported by the National Science Foundation under awards IIS-1426961 and IIS-1717951. Any opinions, findings, and conclusions or recommendations expressed in this material are those of the author(s) and do not necessarily reflect the views of the National Science Foundation.}% <-this % stops a space
\thanks{Authors are with the Neuroscience and Robotics Laboratory (NxR) at the Department of Mechanical Engineering, Northwestern University, 2145 Sheridan Road Evanston, IL 60208 USA.}
\thanks{{\tt\small Email: i-abr@u.northwestern.edu, t-murphey@northwestern.edu}}
 \thanks{Manuscript received April 19, 2005; revised August 26, 2015.}}

\markboth{IEEE Robotics and Automation Letters. Preprint Version. Accepted June, 2018}
{Abraham \MakeLowercase{\textit{et al.}}: Decentralized Ergodic Control}

\maketitle

%%%%%%%%%%%%%%%%%%%%%%%%%%%%%%%%%%%%%%%%%%%%%%%%%%%%%%%%%%%%%%%%%%%%%%%%%%%%%%%%
\begin{abstract}
We present a decentralized ergodic control policy for time-varying area coverage problems for multiple agents with nonlinear dynamics.
Ergodic control allows us to specify distributions as objectives for area coverage problems for nonlinear robotic systems as a closed-form controller. 
We derive a variation to the ergodic control policy that can be used with consensus to enable a fully decentralized multi-agent control policy.
Examples are presented to illustrate the applicability of our method for multi-agent terrain mapping as well as target localization.
An analysis on ergodic policies as a Nash equilibrium is provided for game theoretic applications.
\end{abstract}

\begin{IEEEkeywords}
List of keywords (from the RA Letters keyword list)
\end{IEEEkeywords}

\IEEEpeerreviewmaketitle

%%%%%%%%%%%%%%%%%%%%%%%%%%%%%%%%%%%%%%%%%%%%%%%%%%%%%%%%%%%%%%%%%%%%%%%%%%%%%%%%
\section{INTRODUCTION}
\label{sec:introduction}

\IEEEPARstart{I}{n} the task of exploration and area coverage, decentralized robot networks have been shown to improve the sensing capacity of mobile robots \cite{carmel1999exploration, dudek1996taxonomy, manss2016decentralized} while minimizing computation for individual robotic agents. 
Shifting the computation to the individual robotic agent becomes necessary as the size of the multi-agent network becomes large and coordination of the network becomes costly for a single computing unit to calculate~\cite{manss2016decentralized, viseras2014efficient, khamis2014adaptive, pei2014distributed}.
This is more of an issue if the coordination algorithm becomes more complex with each agent, making it more desirable to have local, decentralized computation that only relies on neighboring information. 
This is also true as the underlying task becomes complex and additional environmental considerations must then be taken into account.
In this paper, we present an algorithm for dynamic decentralized area coverage derived from ergodic control~\cite{mavrommatiTRO2017realTime} that admits nonlinearities in the dynamics of robot and is general to many applications of multi-agent coordination.

Ergodic control~\cite{miller2013trajectory,miller2016ergodic,  mathew2011metrics, mavrommatiTRO2017realTime, shell2006ergodic} enables area coverage for robotic agents with nonlinear dynamics that is general to many applications.
By specifying the ergodic metric for area coverage, it was shown that one can synthesize trajectories that maximally optimize the ergodic metric, resulting in persistent coverage,\footnote{In the sense that the robot is always in motion.} visitation of the entire exploration domain~\cite{miller2013trajectory, miller2016ergodic, abraham2017ergodic}, and resilience to distractors in localization tasks~\cite{miller2015optimalrange, mavrommatiTRO2017realTime}.
In~\cite{mavrommatiTRO2017realTime} it was shown that one can formulate the ergodic control algorithm as a centralized ergodic controller for multiple agents.
However, it has yet to be shown how one can decentralize the algorithm for use in larger, more complex, multi-agent systems where control decisions are made on an individual basis.
Thus, the contribution of this work is a formulation of ergodic control for a multi-agent network as a decentralized algorithm that, through consensus, solves various forms of persistent area coverage problems using the ergodic metric for agents with nonlinear dynamics. 

Existing work in multi-agent coordination addresses the problems of area coverage~\cite{cortes2004coverage, lee2015multirobot, miah2017generalized}, inclusion of sensor constraints~\cite{kantaros2015distributed, vander2015algorithms}, and localization and estimation~\cite{freundlich2018distributed,vander2015algorithms}.
While these methods address specific problems in decentralized coordination, none of these methods have been shown to be flexible enough to solve all the problems.
While we initially frame our algorithm for area coverage, we provide additional examples for target localization, terrain estimation, and coverage in corridors to show that our method can be generalized to other tasks seen in multi-agent coordination~\cite{vander2015algorithms, freundlich2018distributed} without the need to change the specification of our algorithm.
Moreover, our method is distinct from coverage algorithms that rely on Voronoi segmentation of the environment to make coordinated decisions~\cite{kantaros2015distributed, cortes2004coverage, miah2017generalized, lee2015multirobot, alireza2011decentralized}.
Voronoi segmentation requires the specification of a metric for generation of the segmentation in addition to a metric for control and area coverage of each individual robotic agent.
When the dynamics of the robot are nonlinear, control synthesis requires additional assumptions or metrics~\cite{alireza2011decentralized, lee2015multirobot}.
Our method only uses the ergodic metric to formulate control for nonlinear dynamics~\cite{mavrommatiTRO2017realTime}.
Moreover, one can specify the ergodic metric with respect to information densities based on measurement models that include sensor physics/constraints~\cite{mavrommatiTRO2017realTime,miller2016ergodic}.
We show in Section~\ref{subsec:decentralized-ergodic-control-using-consensus} that the requirement of our decentralized algorithm is that the agents need only communicate coefficients representing their actions in order to make independent decisions that reduce the ergodic metric.

The outline of the paper is as follows: 
Section~\ref{sec:prob_statement} defines the problem of area coverage for multi-agent networks. 
Section \ref{sec:decentralized-ergodic-control} introduces ergodicity and the ergodic metric as well as formulates the ergodic control problem for decentralized multi-agent systems. 
A game theoretic analysis on ergodic control policies is provided in Section~\ref{sec:ergodic-control-policies-as-nash-equilibrium-strategies}.
Section \ref{sec:terrain-mapping-using-ergodic-area-coverage} demonstrates the algorithm on an area-coverage problem for multi-agents. 
We then present the problem for multi-agent target localization in Section~\ref{sec:ergodic-control-for-multi-agent-pursuit-evasion-games} and the conclusion is in Section~\ref{sec:conclusions}.

\section{Multi-Agent Area Coverage}
\label{sec:prob_statement}
In this section we present the problem statement that our method solves.
Let us consider a set of $N$ heterogeneous robotic agents where the evolution of the $i^\text{th}$ agent's state $x_i(t) : \mathbb{R}^+ \to \mathbb{R}^n$  at time $t\in \mathbb{R}^+$ is governed by the deterministic nonlinear equation
\begin{equation}
\dot{x}_i = f_i(x_i(t), u_i(t))
\end{equation}
where $u_i(t) : \mathbb{R}^+ \to \mathbb{R}^m$ is an applied control and $f_i(x,u) : \mathbb{R}^n \times \mathbb{R}^m \to \mathbb{R}^n$ is a nonlinear function.
Furthermore, let us define a bounded domain $\mathcal{X}_v \subset \mathbb{R}^v$ whose limits are $\left[ 0, L_1\right]\times \left[ 0, L_2\right] \times \ldots \left[ 0, L_v\right]$ with $v \le n$.
We can consider this bounded domain a ``search space'' where we can define any arbitrary spatial statistic $\phi(s) : \mathcal{X}_v \to \mathbb{R}^+$\footnote{Under the assumption that $\int_{\mathcal{X}_v} \phi(s) ds = 1$.} where $s \in \mathbb{R}^v \subset \mathbb{R}^n$.
Typically $\phi(s)$ is generated from the expected information density~\cite{miller2016ergodic,mavrommatiTRO2017realTime} based on the measurement model and sensor constraints.
The goal of multi-agent area coverage is to position the agents in a system in such a manner that the states of the system $x(t) = \left[x_1(t)^\top, \ldots, x_N(t)^\top \right]^\top$ are proportional to the spatial statistics $\phi(s)$.
That is, we want the statistics of the trajectory of the robots, which we will define as $c(s, x(t))$ to be equal to the spatial statistics $\phi(s)$ through some metric (in our case ergodicity).

We note that our approach treats the problems of target tracking, estimation, and area coverage as the same problem of persistent area coverage, that is, the spatial statistics contains the information for all these problems (which we specify in Sections~\ref{sec:ergodic-control-for-multi-agent-pursuit-evasion-games} and~\ref{sec:terrain-mapping-using-ergodic-area-coverage}).
Moreover, we emphasize persistent area coverage because the basis of the ergodic metric (see Section~\ref{subsec:ergodicity-and-the-ergodic-metric}) revolves around the time-averaged statistics of the multi-agent system trajectories in the search space.
This results in persistent movement and monitoring, rather than placement, of an agent.

The following section formulates the decentralized ergodic controller for a multi-agent system.

\section{Decentralized Ergodic Control}
\label{sec:decentralized-ergodic-control}

In this section, ergodicity and the ergodic metric are introduced and we formulate an ergodic control policy for multi-agent systems.
We make note of the terminology {\it distributed} and {\it decentralized} used in this paper as two distinct terms: 

\begin{definition}
 A {\it distributed} algorithm is one where the initialization of the optimization occurs in a centralized computer hub and then the calculation for the optimization are offloaded onto a set of individual computation units.
\end{definition}
\begin{definition}
 A {\it decentralized} algorithm is one where each individual computational unit solves their own optimization problem that, through communication with a network, solves a larger global optimization problem (typically using some form of consensus)~\cite{deo2016graph, bertsekas1989parallel}.
\end{definition}

\subsection{Ergodicity and the Ergodic Metric}
\label{subsec:ergodicity-and-the-ergodic-metric}
Assume the state at time $t$ is given by $x(t) : \mathbb{R}^+ \to \mathbb{R}^n$. 
Controls to the robot at time $t$ are $u(t) : \mathbb{R}^+ \to \mathbb{R}^m$. \footnote{We drop the $i^\text{th}$ indexing notation for readability and to illustrate that the multi-agent system can be treated as a larger, unified system in later sections.}
The dynamics of the robot are assumed to be governed by a control-affine dynamical system of the form
\begin{equation} \label{eq:robot_dynamics}
\dot{x}(t) = f(x(t),u(t)) = g(x(t)) + h(x(t)) u(t)
\end{equation}
where $g(x) : \mathbb{R}^n \to \mathbb{R}^n$ is the free, unactuated dynamics of the robot, and $h(x): \mathbb{R}^n \to \mathbb{R}^{n \times m}$ is the dynamic control response subject to input $u(t)$.
Let us consider the robot's time-averaged statistics $c(s, x(t))$ for a trajectory $x(t)$ (i.e., the statistics describing where the robot spends most of its time) for some time interval $t \in \left[ t_i, t_i + T\right]$ as
\begin{equation}\label{eq:time_avg_stats}
c(s, x(t)) = \frac{1}{T}\int_{t_i}^{t_i+T} \delta (s - x_v(t)) dt,
\end{equation}
where $\delta$ is a Dirac delta function, $T \in \mathbb{R}^+$ is the time horizon, $t_i \in \mathbb{R}^+$ is the $i^\text{th}$ sampling time, and $x_v(t) \in \mathbb{R}^v$ is the state that intersects with the search space.
An ergodic metric~\cite{mathew2011metrics} which relates the two distributions $c(s,x(t))$ and $\phi(s)$ is:
\begin{align} \label{eq:ergodic_metric}
\mathcal{E}(x(t)) & = q \,\sum_{k \in \mathbb{N}^v} \Lambda_k \left(c_k -\phi_k \right)^2   \\
& = q \, \sum_{k \in \mathbb{N}^v} \left( \frac{1}{T} \int_{t_i}^{t_i + T} F_k(x(t)) dt - \phi_k \right)^2 \nonumber
\end{align}
where
\begin{equation*}
\phi_k =  \int_{\mathcal{X}_v} \phi(s) F_k(s) ds, 
\end{equation*} 
$q \in \mathbb{R}^+$ is a scalar weight on the metric, and $c_k$ are the Fourier decompositions\footnote{The cosine basis function is used, however, any choice of basis function $F_k$ can be used.} of $c(s,x(t))$ and $\phi(s)$ with
\begin{equation*}
F_k(x) = \frac{1}{h_k}\prod_{i=1}^v \cos \left( \frac{k_i \pi x_i}{L_i} \right)
\end{equation*}
being the cosine basis function for a given coefficient $k \in \mathbb{N}^v$,  $h_k$ is a normalization factor defined in~\cite{mathew2011metrics}, and $\Lambda_k = (1 + \Vert k \Vert^2)^{-\frac{v+1}{2}}$ are weights on the frequency coefficients.
A robot whose control inputs result in a trajectory $x(t)$ that minimizes (\ref{eq:ergodic_metric}) as $t\to \infty$ is then said to be optimally ergodic with respect to the target distribution.

Because we are computing the ergodic control in receding horizon, and the target distribution $\phi(s)$ can be time-varying, a history of where a robot has been is maintained in memory in order to compute the ergodic metric.
The ergodic metric is then computed by adding a time parameter $\Delta t_\mathcal{E}$ which governs how far into the past the robot must remember where it has been. 
Equation (\ref{eq:ergodic_metric}) then becomes
\begin{equation}\label{eq:aug_ergodic_metric}
\mathcal{E}(x(t)) = q \, \sum_{k \in \mathbb{N}^v} \left( \frac{1}{T_\mathcal{E}} \int_{t_i-\Delta t_\mathcal{E} }^{t_i + T} F_k(x(t)) dt - \phi_k\right) ^2.
\end{equation}	
Note that choosing $\Delta t_\mathcal{E} = t_i$ would result in storing all past states.
This can be avoided by recursively defining $c_k$ as shown in~\cite{mavrommatiTRO2017realTime}.
In addition, choosing a $\Delta t_\mathcal{E} < T$ would result in very myopic behavior (i.e., only spending time in regions of high spatial statistics).
This is often desired if a time-varying spatial distribution $\phi(s,t)$ is specified where past information is rendered uninformative as the underlying spatial statistics change rapidly.
In practice, a choice of $\Delta t_\mathcal{E} = 2T$ is empirically a reasonable start which can be tuned to performance needs after further evaluation.

\subsection{Ergodic Control}
\label{subsec:ergodic-control}
In~\cite{miller2013trajectory} the ergodic controller is formulated using a trajectory optimization scheme.
While this approach does give optimal solutions, it is difficult for the controller to run in real time. 
As a result,~\cite{mavrommatiTRO2017realTime} developed a hybrid systems approach using~\cite{ansari2016sequential} to obtain control policies that sufficiently reduce the ergodic metric.
We formulate our controller using a similar approach, but provide a variation to the controller that allows the policy to be fully distributable. 
	
Rather than directly minimizing (\ref{eq:ergodic_metric}) with respect to $x(t)$ and $u(t)$, we consider the sensitivity of (\ref{eq:ergodic_metric}) with respect to an infinitesimal time of application  $\lambda \in \mathbb{R^+} \to 0$ of the best possible control $u_\star(t) : \mathbb{R}^+ \to \mathbb{R}^m$ that sufficiently reduces (\ref{eq:ergodic_metric}) at time $\tau \in \mathbb{R}^+$ from some default control $u_\text{def}(t) : \mathbb{R}^+  \to \mathbb{R}^m$.
Following~\cite{mavrommatiTRO2017realTime}, we take the derivative of (\ref{eq:ergodic_metric}) with respect to the duration time $\lambda$ of control $u_\star(t)$ which gives the sensitivity (known as the mode insertion gradient~\cite{vasudevan2013consistent, axelsson2008gradient, egerstedt2006transition,caldwell2016projection}).
\begin{proposition} \label{prop:mode_insert}
The first order sensitivity of (\ref{eq:ergodic_metric} with respect to the control duration $\lambda$ of the applied control $u_\star(\tau)$ is
\begin{equation}\label{eq:mode_insertion}
\frac{\partial \mathcal{E}}{\partial \lambda} \Big \vert_\tau = \rho(\tau)^\top (f_2(\tau, \tau) - f_1(\tau))
\end{equation}
where $f_2(t, \tau) = f(x(t), u_\star(\tau))$, $f_1(t) = f(x(t), u_\text{def}(t))$, and $\rho(t) : \mathbb{R}^+ \to \mathbb{R}^n$ is given by the differential equation
\begin{equation*}
\dot{\rho} = - 2 \frac{q}{T} \sum_{k \in \mathbb{N}^v} \Lambda_k (c_k - \phi_k) \frac{\partial F_k}{\partial x} - \frac{\partial f}{\partial x}^\top \rho(t)
\end{equation*}
with $\rho(t_i + T) = \bold{0} \in \mathbb{R}^n$.
\end{proposition}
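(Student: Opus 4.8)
The plan is to run the standard needle-variation (mode-insertion) argument adapted to the ergodic cost, and then eliminate the resulting time integral with an adjoint state. First I would fix the switching time $\tau$ (with $t_i \le \tau \le t_i+T$) and construct the perturbed trajectory $x_\lambda(\cdot)$ that applies the constant control $u_\star(\tau)$ on $[\tau, \tau+\lambda]$ and the nominal control $u_\text{def}$ elsewhere. The classical needle-variation result (see e.g.\ \cite{ansari2016sequential, axelsson2008gradient}) gives that the state variation $\Psi(t) = \frac{\partial}{\partial \lambda} x_\lambda(t)\big|_{\lambda = 0}$ vanishes for $t \le \tau$, acquires at $\tau$ the jump
\begin{equation*}
\Psi(\tau^+) = f(x(\tau),u_\star(\tau)) - f(x(\tau),u_\text{def}(\tau)) = f_2(\tau,\tau) - f_1(\tau),
\end{equation*}
and thereafter propagates under the linearized dynamics $\dot\Psi = \frac{\partial f}{\partial x}(x(t),u_\text{def}(t))\,\Psi$ on $(\tau, t_i+T]$. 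Establishing this jump condition carefully --- that, to first order in $\lambda$, an infinitesimal pulse of $u_\star$ perturbs the downstream state by exactly the vector-field difference evaluated at $\tau$ --- is the main obstacle; the rest is bookkeeping with the chain rule.

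Next I would differentiate the ergodic metric (\ref{eq:ergodic_metric}) along this variation. Since $c_k = \frac{1}{T}\int_{t_i}^{t_i+T} F_k(x(t))\,dt$ and $\phi_k$ does not depend on the trajectory, the chain rule gives
\begin{equation*}
\frac{\partial \mathcal{E}}{\partial \lambda}\Big|_\tau = 2\,\frac{q}{T}\sum_{k \in \mathbb{N}^v} \Lambda_k\,(c_k - \phi_k)\int_{\tau}^{t_i+T} \frac{\partial F_k}{\partial x}(x(t))^\top \Psi(t)\,dt,
\end{equation*}
where the lower limit may be taken as $\tau$ because $\Psi \equiv 0$ before the switching time; in particular the memory term of (\ref{eq:aug_ergodic_metric}) contributes nothing, so the identical argument covers the augmented-horizon case.

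Finally I would collapse this integral with the adjoint $\rho(t)$ defined in the statement. Differentiating the product $\rho^\top\Psi$ and substituting both ODEs, the two $\frac{\partial f}{\partial x}$ contributions cancel, leaving
\begin{equation*}
\frac{d}{dt}\big(\rho(t)^\top \Psi(t)\big) = -\,2\,\frac{q}{T}\sum_{k \in \mathbb{N}^v} \Lambda_k\,(c_k - \phi_k)\,\frac{\partial F_k}{\partial x}(x(t))^\top \Psi(t).
\end{equation*}
Integrating over $(\tau, t_i+T]$, the left-hand side equals $\rho(t_i+T)^\top \Psi(t_i+T) - \rho(\tau)^\top \Psi(\tau^+) = -\,\rho(\tau)^\top \Psi(\tau^+)$ by the terminal condition $\rho(t_i+T) = \mathbf{0}$ and continuity of $\rho$, while the right-hand side is exactly $-\,\partial \mathcal{E}/\partial\lambda\big|_\tau$ from the previous step. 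Equating the two and substituting $\Psi(\tau^+) = f_2(\tau,\tau) - f_1(\tau)$ yields (\ref{eq:mode_insertion}). I would close by noting that $\rho$ exists and is continuous on $[\tau, t_i+T]$ since its defining equation is linear along the (assumed smooth) nominal trajectory, which legitimizes the integration by parts.
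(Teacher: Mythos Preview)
Your argument is correct: the needle-variation jump $\Psi(\tau^+)=f_2(\tau,\tau)-f_1(\tau)$ followed by the adjoint integration-by-parts is exactly the standard derivation of the mode insertion gradient, and your bookkeeping (cancellation of the $\tfrac{\partial f}{\partial x}$ terms, terminal condition $\rho(t_i+T)=\mathbf{0}$, vanishing of $\Psi$ before $\tau$) is clean. The paper itself does not supply a proof here at all; it simply refers the reader to~\cite{mavrommatiTRO2017realTime}, which in turn relies on the hybrid-systems mode-insertion calculus of~\cite{axelsson2008gradient, egerstedt2006transition, ansari2016sequential}. Your write-up is therefore not a different route so much as an explicit reconstruction of the argument those references contain, specialized to the ergodic running cost $\ell(x)=\tfrac{2q}{T}\sum_k\Lambda_k(c_k-\phi_k)F_k(x)$; in that sense it matches the paper's intended approach while filling in the details the paper omits.
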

\begin{proof}
See~\cite{mavrommatiTRO2017realTime} for more details.
\end{proof}
The mode insertion gradient now represents the sensitivity of the ergodic metric with respect to an application of a control $u_\star (t)$. 

Given the mode insertion gradient, we seek to find the control $u_\star(t)$ that most significantly decreases in the objective (\ref{eq:ergodic_metric}).
We can write this as an unconstrained optimization problem of the form
\begin{equation}\label{eq:secondary_objective}
J_2 = \int_{t_i}^{t_i + T} \frac{\partial \mathcal{E}}{\partial \lambda}\Big \vert_t + \frac{1}{2}\Vert u_\star(t) - u_\text{def}(t) \Vert_R^2
\end{equation}
where $R \in \mathbb{R}^{m \times m}$ is a positive definite matrix that weighs $u_\star(t)$.
Note that~(\ref{eq:secondary_objective}) is quadratic in $u_\star(t)$ which encodes a regularization term with respect the default control $u_\text{def}$ and includes a cost on sufficient decrease in the mode insertion gradient. 
The minimizer of~(\ref{eq:secondary_objective}) with respect to $u_\star(t)$ is the control that provides the most negative mode insertion gradient and reduces the objective (\ref{eq:ergodic_metric}).
\begin{proposition}
The solution to $u_\star(t)$ that minimizes (\ref{eq:secondary_objective}) is
\begin{equation} \label{eq:ustar}
u_\star(t) = -R^{-1} h(x)^\top \rho(t) + u_\text{def}(t).
\end{equation}
\end{proposition}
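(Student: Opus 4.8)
The plan is to substitute the expression for the mode insertion gradient from Proposition~\ref{prop:mode_insert} into the functional $J_2$ and then exploit the control-affine structure of the dynamics to reduce the problem to a pointwise (in time) quadratic minimization. First I would write $\frac{\partial \mathcal{E}}{\partial \lambda}\big\vert_t = \rho(t)^\top(f_2(t,t) - f_1(t))$ and observe that, because $f(x,u) = g(x) + h(x)u$, the free drift $g(x(t))$ cancels in the difference, leaving $f_2(t,t) - f_1(t) = h(x(t))\big(u_\star(t) - u_\text{def}(t)\big)$. Crucially, $\rho(t)$ and $x(t)$ here are the adjoint and state evaluated along the \emph{default} trajectory generated by $u_\text{def}$; the mode insertion gradient is a first-order sensitivity about that nominal trajectory, so neither depends on $u_\star$. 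Hence the integrand of $J_2$ is
\begin{equation*}
\rho(t)^\top h(x(t))\big(u_\star(t) - u_\text{def}(t)\big) + \tfrac{1}{2}\big\|u_\star(t) - u_\text{def}(t)\big\|_R^2,
\end{equation*}
which is an uncoupled family of quadratic forms in the instantaneous value $u_\star(t)$.

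Next I would minimize this integrand pointwise. Since there is no term coupling $u_\star$ at different times and no pointwise constraint on $u_\star(t)$, the minimizer of the integral is obtained by minimizing the integrand at each $t$. Taking the gradient of the integrand with respect to $u_\star(t)$ and setting it to zero gives the stationarity condition
\begin{equation*}
h(x(t))^\top \rho(t) + R\big(u_\star(t) - u_\text{def}(t)\big) = 0,
\end{equation*}
and solving for $u_\star(t)$ yields $u_\star(t) = -R^{-1} h(x(t))^\top \rho(t) + u_\text{def}(t)$, the claimed formula. To confirm this is a minimizer rather than a saddle point, I would note that the Hessian of the integrand with respect to $u_\star(t)$ is $R$, positive definite by assumption, so the quadratic is strictly convex and the stationary point is its unique global minimum; integrating over $t$ preserves this. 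Equivalently one can complete the square in $u_\star(t) - u_\text{def}(t)$.

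I do not expect a serious obstacle here — the argument is essentially a first-order-condition computation. The one point that warrants care is the observation already flagged above: that $J_2$ is to be read as a functional of $u_\star$ alone, with $\rho$ and $x$ frozen at their nominal values, so that the minimization truly decouples across time. If instead one allowed $x$ (and hence $\rho$) to vary with $u_\star$, the problem would no longer be pointwise quadratic; the validity of treating it as such rests on the receding-horizon, infinitesimal-$\lambda$ construction inherited from~\cite{mavrommatiTRO2017realTime}, which I would state explicitly before carrying out the minimization.
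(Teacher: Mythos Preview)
Your proposal is correct and follows essentially the same approach as the paper: differentiate the integrand of $J_2$ with respect to $u_\star$, set the result to zero, and solve. You are in fact more careful than the paper---you explicitly justify the pointwise decoupling, note that $x$ and $\rho$ are frozen along the nominal trajectory, and verify the second-order condition---whereas the paper simply sets the integrated first-order condition to zero and reads off the solution.
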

\begin{proof}
Taking the derivative of (\ref{eq:secondary_objective}) with respect to control $u_\star(t)$ and setting the solution to zero gives
\begin{align}\label{eq:secondary_objective2}
\frac{\partial J_2}{\partial u_\star} & = \int_{t_i}^{t_i + T} \frac{\partial }{\partial u_\star} \left( \frac{\partial \mathcal{E}}{\partial \lambda}\right) + R(u_\star - u_\text{def}) dt \nonumber \\
& = \int_{t_i}^{t_i + T}  h(x)^\top\rho + R(u_\star - u_\text{def}) dt = 0
\end{align}
where the dependency on time is dropped for simplicity.
Solving for $u_\star$ in (\ref{eq:secondary_objective2}) gives
\begin{equation*}
u_\star(t) = -R^{-1}h(x(t))^\top\rho(t) + u_\text{def}(t).
\end{equation*}
\end{proof}
\begin{lemma}\label{lemma:mode_insert}
Assuming that $h(x)^\top \rho \neq 0$, the mode insertion gradient in (\ref{eq:mode_insertion}) is always negative for $u_\star(t)$ defined in (\ref{eq:ustar}), that is $\frac{\partial \mathcal{E}}{\partial \lambda} < 0 \,  \forall u_\star \in \mathcal{U}$ where $\mathcal{U}$ is the control space.
\end{lemma}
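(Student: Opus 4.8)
The plan is to substitute the closed-form control~(\ref{eq:ustar}) directly into the mode insertion gradient~(\ref{eq:mode_insertion}) and exploit the control-affine structure~(\ref{eq:robot_dynamics}) to collapse the expression into a single quadratic form in $h(x)^\top\rho$, whose sign is then immediate.

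First I would use $f(x,u) = g(x) + h(x)u$ to evaluate the difference appearing in~(\ref{eq:mode_insertion}):
\[
f_2(\tau,\tau) - f_1(\tau) = \big(g(x(\tau)) + h(x(\tau))u_\star(\tau)\big) - \big(g(x(\tau)) + h(x(\tau))u_\text{def}(\tau)\big) = h(x(\tau))\big(u_\star(\tau) - u_\text{def}(\tau)\big),
\]
so that the free dynamics $g$ cancel and only the actuated term survives. Substituting~(\ref{eq:ustar}), which gives $u_\star(\tau) - u_\text{def}(\tau) = -R^{-1}h(x(\tau))^\top\rho(\tau)$, into~(\ref{eq:mode_insertion}) then yields
\[
\frac{\partial \mathcal{E}}{\partial \lambda}\Big\vert_\tau = -\,\rho(\tau)^\top h(x(\tau))\, R^{-1}\, h(x(\tau))^\top \rho(\tau) = -\big\Vert h(x(\tau))^\top \rho(\tau)\big\Vert_{R^{-1}}^2 .
\]

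Next I would invoke that $R$ is positive definite by hypothesis, hence $R^{-1}$ is positive definite, so the quadratic form $z^\top R^{-1} z$ is nonnegative for every $z$ and strictly positive whenever $z \neq 0$. Taking $z = h(x(\tau))^\top\rho(\tau)$ and using the standing assumption $h(x)^\top\rho \neq 0$, the quadratic form is strictly positive, whence $\frac{\partial \mathcal{E}}{\partial \lambda}\vert_\tau < 0$. Since $\tau$ was arbitrary in the horizon and the $u_\star$ constructed in~(\ref{eq:ustar}) belongs to the admissible control space $\mathcal{U}$, this establishes the claim for all such $u_\star$.

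I do not expect a genuine obstacle: the lemma is essentially a corollary of the closed-form minimizer of~(\ref{eq:secondary_objective}) together with the control-affine assumption. The only point needing care is that the inequality must be \emph{strict} rather than merely $\le 0$, and this is precisely where the hypothesis $h(x)^\top\rho \neq 0$ enters — without it the mode insertion gradient can vanish, which would simply indicate that $u_\text{def}$ is already locally optimal so that no strictly improving perturbation exists.
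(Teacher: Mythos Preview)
Your proposal is correct and follows essentially the same route as the paper: substitute~(\ref{eq:ustar}) into~(\ref{eq:mode_insertion}), use the control-affine form to cancel $g(x)$, and recognize the result as $-\Vert h(x)^\top\rho\Vert_{R^{-1}}^2$. If anything, you are slightly more explicit than the paper in spelling out the $f_2-f_1$ cancellation and in tying the strictness of the inequality to the hypothesis $h(x)^\top\rho\neq 0$.
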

\begin{proof}
Inserting (\ref{eq:ustar}) into (\ref{eq:mode_insertion}) gives
\begin{align}\label{eq:mode_insertion_neg}
\frac{\partial \mathcal{E}}{\partial \lambda} &= h(x)^\top\rho \left(-R^{-1} h(x)^\top\rho \right) \nonumber \\
& = -\rho^\top h(x)R^{-1}h^\top\rho = - \Vert h(x)^\top\rho \Vert_{R^{-1}}^2 < 0.
\end{align}
Thus (\ref{eq:mode_insertion_neg}) shows us that $\forall u_\star \in \mathcal{U}$ defined in (\ref{eq:ustar}), $\frac{\partial \mathcal{E}}{\partial \lambda}<0$.
\end{proof}
Because (\ref{eq:ustar}) always provides a negative $\frac{\partial \mathcal{E}}{\partial \lambda}$, this implies that each control that is chosen will result in a decrease in (\ref{eq:ergodic_metric}); thus eventually minimizing the ergodic metric.
Additionally, as in~\cite{mavrommatiTRO2017realTime}, a contractive constraint on the reduction of the ergodic metric is enforced that further provides a reduction in the ergodic metric from the previous control calculation time.

In many robotics applications, it is required that the control is saturated due to actuation limits in the robot while maintaining some form of sufficient decrease in the objective cost. 
In this work, we select a time of application $\tau$ that results in the most negative mode insertion gradient, or more formally written by
\begin{equation*}
\tau_\star= \underset{\tau}{\text{argmin }} \frac{\partial \mathcal{E}}{\partial \lambda}
\end{equation*}
where the subscript $\star$ indicates the time of application that results in the most negative mode insertion gradient.
A line search~\cite{armijo1966minimization} is then used to find the duration $\lambda$ that significantly reduces (\ref{eq:ergodic_metric}) subject to the saturated control $u_\star(\tau)$.
The resulting control is then added to the default control $u_\text{def}(t) = u_\star(\tau) \forall t \in \left[ \tau, \tau+\lambda \right] \cap \left[ t_i, t_i+t_s\right]$ where $t_s$ is the sampling time and $u_\star(\tau)$ is saturated.

The following subsection derives the ergodic control policy for a decentralized multi-agent systems.

\subsection{Decentralized Ergodic Control using Consensus}
\label{subsec:decentralized-ergodic-control-using-consensus}

Consider a set of $N$ agents with state $x(t) = \left[ x_1(t)^\top, x_2(t)^\top, \ldots, x_N(t)^\top\right]^\top : \mathbb{R}^+ \to \mathbb{R}^{n N}$. \footnote{For readability we consider a homogeneous set of agents with the same state dimension $x_i(t) \in \mathbb{R}^n$. However, this analysis can be done for a heterogeneous set of agents with arbitrary dynamics and state dimensions.}
\begin{proposition}
Given the default trajectory of each agent $x(t) \forall t \in \left[ t_i - \Delta t_\mathcal{E}, t_i + T\right]$ subject to $u_\text{def}(t)$, the control policy (\ref{eq:ustar}) is {\it distributable} amongst each individual agent and independent of the other agent's control policy.
\end{proposition}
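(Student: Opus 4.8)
Here is how I would prove this proposition.

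The plan is to exploit two structural facts. First, because each agent's control acts only on its own state, the stacked multi-agent system is block-diagonal: writing $f(x,u)=[f_1(x_1,u_1)^\top,\dots,f_N(x_N,u_N)^\top]^\top$ we get $g(x)$, $h(x)=\mathrm{blockdiag}(h_1(x_1),\dots,h_N(x_N))$ and $\partial f/\partial x=\mathrm{blockdiag}(\partial f_1/\partial x_1,\dots,\partial f_N/\partial x_N)$ all block-diagonal, and we take the penalty $R=\mathrm{blockdiag}(R_1,\dots,R_N)$ (each agent weighting its own effort) and a common memory/horizon pair $(\Delta t_\mathcal{E},T)$. Second, once the default trajectories are fixed, the only quantities through which the agents enter the ergodic metric are the Fourier coefficients $c_k$, and these become known constants.

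First I would write the multi-agent time-averaged statistics as the agent-average, $c(s,x(t))=\frac1N\sum_{i=1}^N\frac{1}{T_\mathcal{E}}\int_{t_i-\Delta t_\mathcal{E}}^{t_i+T}\delta(s-x_{i,v}(t))\,dt$, so that $c_k=\frac1N\sum_{i=1}^N c_k^{(i)}$ with $c_k^{(i)}=\frac{1}{T_\mathcal{E}}\int F_k(x_{i,v}(t))\,dt$. By hypothesis the default trajectory of every agent on $[t_i-\Delta t_\mathcal{E},t_i+T]$ is given, hence every $c_k^{(i)}$, and therefore $c_k$ and the residual $(c_k-\phi_k)$, is a fixed scalar determined by a one-time exchange of the default Fourier coefficients — no agent needs another agent's control \emph{policy} to evaluate it.

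Next I would substitute this structure into the adjoint equation of Proposition~\ref{prop:mode_insert}. The forcing term $\partial F_k/\partial x$ evaluated on the stacked state has $i$-th block $\frac1N\,\partial F_k(x_{i,v})/\partial x_i$, the Jacobian $\partial f/\partial x^\top$ is block-diagonal, and the terminal condition $\rho(t_i+T)=\mathbf 0$ splits blockwise, so the adjoint ODE separates into $N$ decoupled terminal-value problems
\begin{equation*}
\dot\rho_i=-\frac{2q}{T N}\sum_{k\in\mathbb{N}^v}\Lambda_k(c_k-\phi_k)\frac{\partial F_k}{\partial x_i}-\frac{\partial f_i}{\partial x_i}^\top\rho_i,\qquad \rho_i(t_i+T)=\mathbf 0,
\end{equation*}
each depending only on agent $i$'s own default trajectory and dynamics together with the already-known constants $(c_k-\phi_k)$. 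Plugging the block-diagonal $h(x)$, $R$ and the block-separated $\rho$ into (\ref{eq:ustar}) then yields $u_{\star,i}(t)=-R_i^{-1}h_i(x_i)^\top\rho_i(t)+u_{\mathrm{def},i}(t)$, computable by agent $i$ alone. Hence, after the centralized initialization step of exchanging default-trajectory coefficients, each agent solves its own instance of (\ref{eq:ustar}) independently of the others' control policies, which is the claim.

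The main obstacle is the adjoint equation: one must verify carefully that the inter-agent coupling in the forcing term collapses exactly to the scalars $(c_k-\phi_k)$ — i.e., that $\partial F_k/\partial x$ is block-separable and that no cross-agent Jacobian blocks appear in $\partial f/\partial x^\top$ — and keep track of the $1/N$ normalization and of the assumption that $R$ and $(\Delta t_\mathcal{E},T)$ are shared so the per-agent subproblems are well posed. The remainder is bookkeeping with the block structure.
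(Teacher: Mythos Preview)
Your proposal is correct and follows essentially the same route as the paper: write the stacked dynamics with block-diagonal $g$, $h$, and $\partial f/\partial x$, express the collective $c_k$ as the agent-average $\frac{1}{N}\sum_j c_k^{(j)}$, observe that the adjoint ODE therefore decouples blockwise into per-agent terminal-value problems driven by the shared scalars $(c_k-\phi_k)$, and then read off $u_{\star,j}=-R_j^{-1}h_j(x_j)^\top\rho_j+u_{\mathrm{def},j}$ from the block structure of (\ref{eq:ustar}). You are slightly more explicit than the paper in flagging the needed side assumptions (block-diagonal $R$, common $(\Delta t_\mathcal{E},T)$), which is fine.
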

\begin{proof}
Let us first define the dynamics of the collective multi-agent system as
\begin{align} \label{eq:collective_dynamics}
\dot{x} & = f(x,u) = g(x) + h(x) u \nonumber\\
& = \begin{bmatrix}
g_1(x_1) \\
g_2(x_2) \\
\vdots \\
g_N(x_N)
\end{bmatrix} + 
 \begin{bmatrix}
h_1(x_1) & \ldots & 0\\
\vdots& \ddots & \\
0 & & h_N(x_N)
\end{bmatrix} u
\end{align}
where $h(x)$ is block diagonal.
The multi-agent system's contribution to the time-averaged statistics $c_k$ can be rewritten as
\begin{align}\label{eq:centralized_ck}
c_k & = \frac{1}{N} \sum_{j=1}^N \frac{1}{T_\mathcal{E}} \int_{t_i -\Delta t_\mathcal{E}}^{t_i + T} F_k(x_j(t)) dt \nonumber \\
& = \frac{1}{T_\mathcal{E}} \int_{t_i - \Delta t_\mathcal{E}}^{t_i+T} \tilde{F}_k(x(t)) dt
\end{align}
where $\tilde{F}_k(x(t)) = \frac{1}{N}\sum_j F_k(x_j(t))$.
The mode insertion gradient (\ref{eq:mode_insertion}) under a multi-agent dynamical system now has $f_1(t)$ and $f_2(t, \tau)$ defined by (\ref{eq:collective_dynamics}) and the convolution equation for the adjoint variable $\rho(t)$ becomes
\begin{equation}\label{eq:decentralized_adjoint}
\dot{\rho} = -2 \frac{q}{T_\mathcal{E}}\sum_{k \in \mathbb{N}^v} \Lambda \left( c_k - \phi_k \right) \frac{\partial \tilde{F}_k}{\partial x} - \frac{\partial f}{\partial x}^\top \rho
\end{equation}
where 
\begin{equation*}
\frac{\partial \tilde{F}_k}{\partial x} = \frac{1}{N} \begin{bmatrix}
\frac{\partial F_k (x_1)}{\partial x_1} \\
\vdots \\
\frac{\partial F_k(x_N)}{\partial x_N}
\end{bmatrix} 
\text{ and }
\frac{\partial f}{\partial x} =
\begin{bmatrix}
\frac{\partial f_1}{\partial x_1} & 0 & \ldots & 0 \\
0 & \frac{\partial f_{2}}{\partial x_{2}} \\
\vdots & & \ddots  & \\
0 &  & & \frac{\partial f_N}{\partial x_N}
\end{bmatrix}
\end{equation*}
is block diagonal.
Because each agent's dynamics are independent of each other, (\ref{eq:decentralized_adjoint}) can be written independently for each agent as
\begin{equation*}
\dot{\rho}_j = -2\frac{q}{T_\mathcal{E} N} \sum_{k \in \mathbb{N}^v} \Lambda_k (c_k - \phi_k) \frac{\partial F_k(x_j)}{\partial x_j} - \frac{\partial f_j}{\partial x_j}^\top \rho_j.
\end{equation*}
Similarly, the ergodic control policy derived from (\ref{eq:secondary_objective2}) becomes
{\small
\begin{multline} \label{eq:expanded_control}
\begin{bmatrix}
u_{\star,1} (t) \\
\vdots \\
u_{\star,_N}(t) \\
\end{bmatrix}
=
-R^{-1}  
 \begin{bmatrix}
h_1(x_1) & \ldots & 0\\
\vdots& \ddots & \\
0 & & h_N(x_N)
\end{bmatrix} ^\top
\begin{bmatrix}
\rho_1 (t) \\
\vdots \\
\rho_N(t)
\end{bmatrix}
 \\+
 \begin{bmatrix}
 u_{\text{def},1} (t) \\
 \vdots \\
 u_{\text{def},N}(t)
 \end{bmatrix}
\end{multline}}
where $R\in\mathbb{R}^{mN \times mN}$ and $mN$ is the size of the collective multi-agent system control input.
Since $h(x)$ is block diagonal,  (\ref{eq:expanded_control}) becomes
\begin{equation}\label{eq:policy_independence}
u_{\star,j} (t) = -R_j^{-1} h_j(x_j)^T \rho_j(t) + u_{\text{def},j}(t) 
\end{equation}
for each agent $j \in \left[ 1, \ldots, N \right]$ and $R_j \in \mathbb{R}^{m \times m}$.
The control policy in (\ref{eq:policy_independence}) for the $j^\text{th}$ agent does not depend on the $i^\text{th}$ agent and therefore is distributable.
\end{proof}

While the $j^\text{th}$ control policy is independent of the $i^\text{th}$ control policy, it is assumed starting from (\ref{eq:aug_ergodic_metric}) that each agent's past and anticipated trajectory is known to all agents before calculating the control policy.
We can consider this a distributed ergodic control policy where the control computation is still done on individual CPUs on-board the agents, but the initial conditions are required to be sent from a central communication hub. 
Instead of a distributed controller, we seek to completely remove the need for a centralized communication hub and have fully independent agents solve smaller ergodic control problems that solve the same larger multi-agent ergodic control problem. 
We address this problem using consensus-based methods where a network of agents communicates with one another the local $c_k$ for the individual agent. 

Rather than communicating the past and anticipated trajectories of each agent (which may have large dimensionality) in the network, we communicate the $c_k$ values instead. \footnote{It is assumed that each agent has the same $\phi_k$ target, however, the same analysis can be done to form a consensus on the target $\phi_k$ values.}
\begin{proposition}
A connected multi-agent network under consensus over the $c_k$ coefficients approximates the time-average statistics $c_k$ of the centralized ergodic metric (\ref{eq:centralized_ck}), that is $\tilde{c}_k \to c_k$ as $t \to \infty$ where $\tilde{c}_k$ is the consensus-based time-average statistics.
\end{proposition}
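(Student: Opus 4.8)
The plan is to cast the communication network as a weighted graph $\mathcal{G} = (\mathcal{V}, \mathcal{E})$ with one vertex per agent and to let each agent $j$ carry a local estimate $\tilde{c}_k^j(t)$ of the global coefficient $c_k$ for each retained index $k$. First I would observe that the centralized quantity in (\ref{eq:centralized_ck}) is exactly the arithmetic mean $c_k = \frac{1}{N}\sum_{j=1}^N c_k^j$ of the per-agent contributions $c_k^j = \frac{1}{T_\mathcal{E}}\int_{t_i-\Delta t_\mathcal{E}}^{t_i+T} F_k(x_j(t))\,dt$, so the statement reduces to showing that a consensus protocol seeded with the local $c_k^j$ recovers their average. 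I would then stack the estimates into $z_k = [\tilde{c}_k^1, \dots, \tilde{c}_k^N]^\top$ and adopt the standard (continuous-time) average-consensus flow $\dot{z}_k = -L z_k$ with the graph Laplacian $L$, each agent initialized at $z_k^j(0) = c_k^j$ (the discrete-time Perron-matrix iteration run several times per control step is an equivalent alternative).

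The core of the argument is then the classical spectral description of the Laplacian of a connected graph: $L$ is symmetric positive semidefinite, $L\mathbf{1} = 0$ with $\mathbf{1}$ spanning the one-dimensional kernel, and the remaining eigenvalues satisfy $0 < \lambda_2 \le \dots \le \lambda_N$. Splitting $z_k(0)$ into its component along $\mathbf{1}$ and its component in $\mathbf{1}^\perp$, the flow preserves the former since $\mathbf{1}^\top \dot{z}_k = -\mathbf{1}^\top L z_k = 0$ (the network average is conserved) and contracts the latter at rate $\lambda_2$; hence $z_k(t) \to (\tfrac{1}{N}\mathbf{1}\mathbf{1}^\top) z_k(0) = c_k\,\mathbf{1}$ exponentially, i.e. $\tilde{c}_k^j \to c_k$ for every agent. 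This step only uses facts available in the graph-theory references already cited, and it applies componentwise over the finitely many retained Fourier indices, so the simultaneity over $k$ is immediate; one does need the protocol to be average-preserving, which I would secure by restricting to connected undirected (or balanced strongly-connected directed) topologies, with jointly-connected switching graphs left as a standard extension.

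The part I expect to be the main obstacle is that $c_k$ is \emph{not} constant: the agents are moving, the receding horizon slides, and a time-varying target $\phi(s,t)$ may be present, so the average the agents are chasing drifts while consensus runs. I would resolve this with a two-time-scale argument: run the consensus updates on a time scale much faster than the control/sampling period $t_s$ (equivalently, scale $L$ by a large gain or take many iterations per control step), so that within each interval $z_k$ equilibrates to the current instantaneous average before it has moved appreciably. Writing $e_k = z_k - c_k\mathbf{1}$, the drift $-\dot{c}_k\mathbf{1}$ enters as a bounded disturbance into exponentially stable $e_k$-dynamics, yielding an input-to-state bound $\|e_k(t)\| \le C e^{-\lambda_2 t}\|e_k(0)\| + \tfrac{1}{\lambda_2}\sup_\tau |\dot{c}_k(\tau)|$, so the residual error is at most the rate of change of the statistics over the algebraic connectivity. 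The claimed convergence $\tilde{c}_k \to c_k$ then holds exactly in the regime $\dot{c}_k \to 0$ — which is precisely the setting in which the ergodic metric is being driven to its minimum — and otherwise as a bounded-tracking statement that can be made arbitrarily tight by increasing the consensus gain relative to $1/t_s$.
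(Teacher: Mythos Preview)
Your argument is correct and shares the paper's core idea: note that the centralized $c_k$ in (\ref{eq:centralized_ck}) is the arithmetic mean of the per-agent coefficients, then invoke the standard average-consensus result on a connected graph. The paper's own proof is shorter and uses the discrete-time formulation---a doubly stochastic matrix $P$ and the limit $P^{t_k}\to\tfrac{1}{N}\mathbf{1}\mathbf{1}^\top$---whereas you work with the continuous-time Laplacian flow $\dot z_k=-Lz_k$ and its spectral decomposition; these are equivalent classical statements and both rest on the same cited references. Where you go beyond the paper is the two-time-scale / input-to-state analysis for the drifting average: the paper's proof treats the $c_{k,j}$ as fixed inputs to the consensus iteration and does not address the fact that they evolve with the receding horizon, so your bounded-tracking bound $\|e_k\|\le\tfrac{1}{\lambda_2}\sup_\tau|\dot c_k(\tau)|$ actually supplies rigor the paper omits. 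In short, same approach, with your version filling a gap the paper leaves implicit.
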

\begin{proof}
Consider the collective time-averaged statistics $c_k$ for the system in (\ref{eq:centralized_ck}):
\begin{equation*}
c_k = \frac{1}{N} \sum_{j=1}^N \frac{1}{T} \int_{t_i}^{t_i + T} F_k(x_j(t))dt.
\end{equation*}
Equation (\ref{eq:centralized_ck}) is simply an averaging of the individual agent's spatial statistics. 
Let us then define a row and column stochastic consensus matrix $P$ (e.g., $\sum_j P_{ij} = \bold{1}$) that defines the network connectivity amongst the agents \cite{deo2016graph, bertsekas1989parallel}.
The operation $\sum_j P_{ij} c_{k,j}$ is equivalent to taking an average of the local $c_{k,j}$ values for each neighboring agent. \footnote{For simplicity in notation, we assume that $P_{ij}$ refers to a block matrix such that $P \in \mathbb{R}^{\vert k \vert N \times \vert k \vert N}$ and $P_{ij} \in \mathbb{R}^{\vert k \vert \times \vert k \vert}$ where $\vert k \vert$ is the total number of $c_k$ coefficients.} 
Therefore, we can write a consensus on the collective $c_k$ (\ref{eq:centralized_ck}) using $P$ as \cite{deo2016graph, bertsekas1989parallel}
\begin{equation*}
\lim_{t_k \to \infty}\sum_j P_{ij}^{t_k} c_{k,j} = \frac{1}{N}\sum_j \frac{1}{T} \int_{t_0}^{t_0 +T} F_k(x_j(t)) dt
\end{equation*}
where $N$ is the number of agents, $t_k$ is the number of times that $P_{ij}c_{k,j}$ values have been communicated through the network and averaged.
Thus consensus amongst all the agents approximates the collective multi-agent system time-averaged statistics $c_k$ in (\ref{eq:centralized_ck}).
\end{proof}
%Choosing the time of application and duration remains the same where each agent decides their own actions according to the decentralized mode-insertion gradient.
Algorithm~\ref{alg:decentralized-ergodic-control} is provided to illustrate the decentralized ergodic control policy for multi-agent systems.

\begin{algorithm}
\caption{Decentralized Ergodic Control} \label{alg:decentralized-ergodic-control}
\centering
\begin{algorithmic}[1]
% \Procedure{eSAC}{}
\State \textbf{initialize:} agents $N$ with initial condition $x_j(0)$, initial target distribution $\phi_{k,0}$, $t_0, t_f, t_s$, time horizon $T$, ergodic memory $\Delta t_\mathcal{E}$ and network $P$.
\While{$t_i < t_f$}
\For{each agent}
\Comment Control step
\State simulate $x(t), \rho(t)$ for $t \in \left[ t_i, t_i +T \right] $ from $x(t_i)$
\State compute $u_\star(\tau)$ from (\ref{eq:ustar})
\State calculate $\tau$ and $\lambda$ from \cite{ansari2016sequential, mavrommatiTRO2017realTime}
\State $u_\text{def}(t) = u_\star(\tau) \forall t \in \left[ \tau, \tau+\lambda \right] \cap \left[ t_i, t_i+t_s\right]$
\EndFor
\For{each agent}
\Comment Communication Loop
\State Send $c_{k,j}$ to $i^\text{th}$ neighbors in the network $P$
\State Receive $c_{k,i}$ from neighbors and average amongst $i^\text{th}$ neighbors
\EndFor
\State apply control $u_\text{def}(t_i)$
\State $i \gets i +1$
\EndWhile
% \EndProcedure
\end{algorithmic}
\end{algorithm}

\subsection{Communication Complexity and Scalability}
Since the ergodic metric is defined in terms of the Fourier coefficients of the agent's trajectory and the spatial statistics, each agent is only required to transmit their own $c_{k,j}$ trajectory coefficients.
The benefit of this is two-fold: First, each agent in the decentralized network need only store their own past trajectory information for computing $c_{k,j}$.
Thus, the required storage for a $64$ bit memory is $64 * \Delta t_\mathcal{E}/ t_s * n$ bits where $t_s$ is the sampling rate.
We can further reduce the memory requirements by recursively defining the $c_{k,j}$ values as done in~\cite{mavrommatiTRO2017realTime}.
The second benefit is in the complexity and scale of the algorithm as the number of agents increases.
Since each agent only needs to communicate their local $c_{k,j}$ values to their neighbors, the computational burden lies in computing the ergodic control for the individual agents themselves. 
Because we have shown that we can fully decentralize the ergodic control calculations, the computation remains constant to each robot.
Thus, the computational complexity of the ergodic controller only scales with the dimensions of the single agent's state (which for practical purposes will remain constant as the agents' state dimensions are not time-varying) and the decentralized algorithm does not scale by increasing the number of agents in the network.

In the following section, we provide an analysis of the ergodic control policy in a game-theoretic point of view.

\begin{figure*}[thpb]
\centering
\framebox{\parbox{6.8in}{
\centering
\includegraphics[scale=0.9]{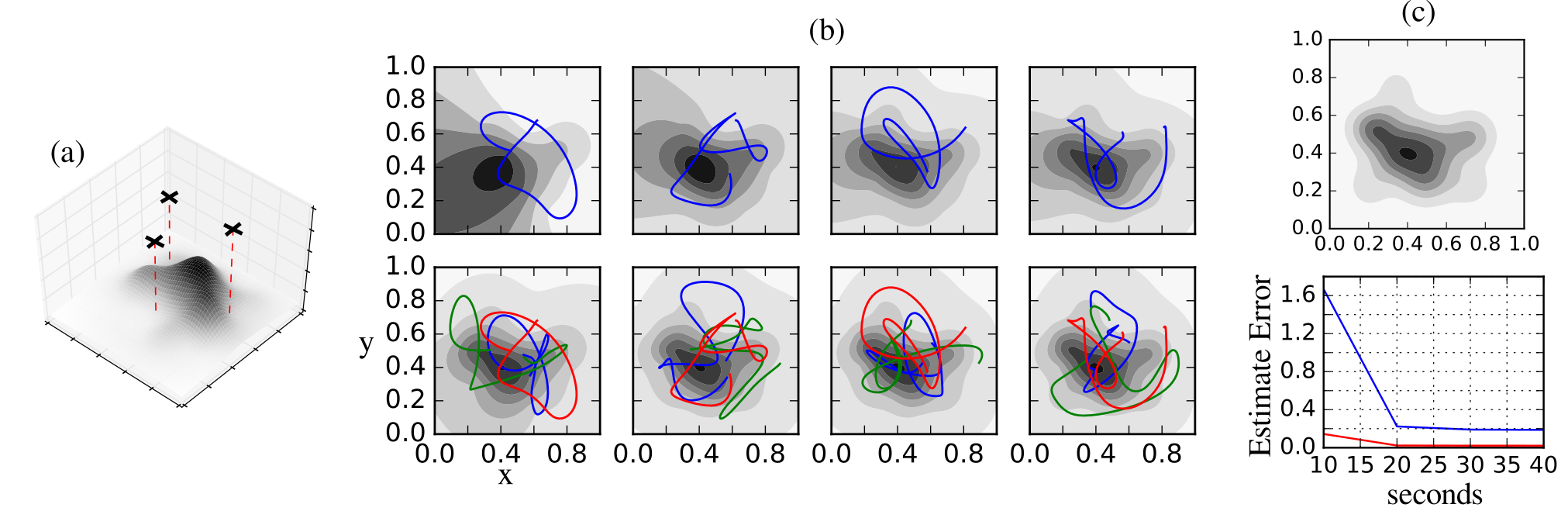}}}
\caption{
(a) Three quadcopter agents are depicted in the map with the terrain. The red dashed line indicates the location.
(b) Trajectories of a single agent (blue) and a multi-agent system (blue, red, green) are shown estimating terrain. The terrain map is obtained from height measurements (dark regions represent high elevation).
(c) Top-down orthographic view of the terrain map for comparison with the results in (b). 
Error of the estimate is shown at $10$ second intervals of collected data. 
Our algorithm is able to coordinate the agents such that more area is covered, enabling the collection of more data and a resulting better terrain map.}
\label{fig:mapping_results}
\end{figure*}

\section{Ergodic Control Policies as Nash Equilibrium Strategies}
\label{sec:ergodic-control-policies-as-nash-equilibrium-strategies}
In this section, we analyze the ergodic control policy from a game theoretic point of view in adversarial multi-agent games.
\begin{definition}
A game is defined by a tuple $(\mathcal{P}, \mathcal{A}, \mathcal{O}, \mu, \mathcal{U})$ where $\mathcal{P}$ is the set of players in a game, $\mathcal{A}$ is the set of control actions $u(t)$ where $u_i(t) \forall t \in \left[ t_i, t_i + T \right], \forall i \in \mathcal{P}$ is considered an action or strategy profile, $\mathcal{O}$ is the set of outcomes (or state trajectories in our case), $\mu : \mathcal{A} \to \mathcal{O}$ is the function that maps actions to outcomes (in our case this is the robot dynamics), and last $\mathcal{U} : \mathcal{O} \to \mathbb{R}$ is a utility function that we index for each $i^\text{th}$ player using the subscript $i$.
\end{definition}

Each agent is defined by $\mathcal{P}$.
The action profile or strategy $\mathcal{A}$ is defined by the ergodic control policy subject to a target distribution.
The resultant trajectory $x(t)$ for each agent is the outcome $\mathcal{O}$ subject to the actions passing through the dynamics $f(x,u)$ of the system ($\mu$).
Here, we treat the utility function $\mathcal{U}$ as the ergodic metric.
In game theory, the notion of Nash equilibrium~\cite{bhattacharya2009existence, myerson1978refinements} is often used to describe a strategy in a game.
\begin{definition}
A strategy is a Nash equilibrium if for each agent $i$, $\mathcal{U}_i(u) \le \mathcal{U}_i( u_{-i})$ where $u_{ -i}$ is the updated strategy profile for all agents not including agent $i$'s strategy.
\end{definition}

Nash equilibrium tells us whether a strategy results in the best possible expected utility of each agent subject to the other agents' actions.
We consider Nash equilibrium in the problem of target localization and evasion.
Specifically, we look at what strategy an evader can use to acquire a Nash equilibrium with the pursuer (localizer) (i.e., a game between the pursuer and evader while the pursuer expends energy not localizing the evader).
\begin{theorem}
A Nash equilibrium strategy against a pursuer with an ergodic policy is for the evader to adopt an ergodic policy.
\end{theorem}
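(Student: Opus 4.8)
The plan is to show that when the pursuer commits to an ergodic control policy with respect to a target distribution $\phi(s)$ built from its measurement model of the evader, the evader can do no better — in the sense of the utility $\mathcal{U}$, taken here to be the ergodic metric $\mathcal{E}$ — than to itself adopt an ergodic policy against the pursuer's induced distribution. First I would set up the two-player game explicitly using the tuple $(\mathcal{P},\mathcal{A},\mathcal{O},\mu,\mathcal{U})$: $\mathcal{P}=\{\text{pursuer},\text{evader}\}$, the pursuer's strategy is $u_\star$ from \eqref{eq:ustar} applied to the spatial statistic $\phi$ encoding where it expects the evader to be, and the evader's outcome trajectory $x_e(t)$ determines, through the pursuer's sensor model, the effective target distribution $\phi_e(s)$ that the pursuer must cover. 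The key observation is that the pursuer's expended ``localization effort'' is measured exactly by how ergodic its trajectory is relative to $\phi_e$; by Lemma~\ref{lemma:mode_insert}, the pursuer's policy drives $\mathcal{E}\to 0$ along its own trajectory, so the evader's payoff against a fixed ergodic pursuer is governed by the residual mismatch between the pursuer's time-averaged statistics and $\phi_e$.

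Next I would argue the fixed-point/best-response step. Fix the pursuer's ergodic policy. The evader chooses $x_e(t)$ to maximize the pursuer's ergodic cost (equivalently, to keep the pursuer maximally ``spread'' and unable to concentrate). I would show that any non-ergodic evader trajectory — one whose own time-averaged statistics $c_k^e$ deviate from the stationary/uniform reference that the pursuer's coverage guarantees it will match — is strictly dominated: because the pursuer's policy provably reduces $\mathcal{E}$ (monotone descent via the mode insertion gradient, \eqref{eq:mode_insertion_neg}), any structure in the evader's distribution is eventually tracked, lowering the evader's payoff. Conversely, if the evader is ergodic with respect to the pursuer's distribution, its trajectory statistics become the ``hardest'' stationary target, and neither player can unilaterally improve: the pursuer is already playing its best response (its policy is, by construction from \eqref{eq:secondary_objective}, the optimal sensitivity-reducing control against any target), and the evader's deviation only sharpens a feature the pursuer will exploit. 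This mutual best-response property is precisely the Nash condition $\mathcal{U}_i(u)\le\mathcal{U}_i(u_{-i})$ from the definition above.

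I would then assemble these into the equilibrium claim: with the pursuer fixed at its ergodic policy and the evader fixed at its ergodic policy (against the pursuer-induced $\phi$), the pair $(u_p,u_e)$ satisfies the Nash inequality for both players, invoking Proposition~2 (distributability/independence of the policies) to note that each agent's optimal control depends only on its own dynamics and the shared coefficients, so the best-response computations decouple cleanly. The main obstacle I anticipate is making the ``hardest target'' claim rigorous: I need the symmetry argument that an ergodic trajectory is the stationary distribution a coverage controller converges to, so that the evader adopting it creates a self-consistent fixed point — this requires carefully relating the evader's $c_k^e$ to the $\phi_k$ that the pursuer sees and showing no unilateral deviation in either the $\lambda$–$\tau$ line-search sense or the full control space $\mathcal{U}$ strictly decreases that agent's utility. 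I would handle this by leaning on Lemma~\ref{lemma:mode_insert} for the monotone-descent direction and on the convexity of \eqref{eq:secondary_objective} in $u_\star$ to rule out profitable deviations, treating the time-varying $\phi(s,t)$ coupling as the quasi-static limit where $\Delta t_\mathcal{E}$ is large.
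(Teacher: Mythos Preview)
Your approach diverges substantially from the paper's. You attempt a genuine best-response analysis: fix the pursuer's ergodic policy, argue that the evader's best response is itself ergodic by characterizing an ergodic trajectory as the ``hardest target,'' and then close the loop via mutual best-response and a fixed-point argument. The paper does none of this. Its entire proof is a direct first-order application of Lemma~\ref{lemma:mode_insert}: it posits two agents $a,b$, each ergodic with respect to its own state-dependent target $\phi_a,\phi_b$, observes that the ergodic policy guarantees $\frac{\partial \mathcal{E}_i}{\partial \lambda}<0$ for each agent, and then writes the change in utility when agent $i$ adopts its ergodic action as $\Delta \mathcal{U}_i \approx \frac{\partial \mathcal{E}_i}{\partial \lambda}\,\lambda < 0$, from which $\mathcal{U}_i(u) < \mathcal{U}_i(u_{-i})$ follows immediately. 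There is no fixed-point reasoning, no analysis of alternative evader strategies, no ``hardest target'' claim, and no appeal to the distributability proposition or to convexity of (\ref{eq:secondary_objective}).

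The obstacle you yourself flag---making the ``hardest target'' claim rigorous---is real, and nothing in the paper supplies it. Lemma~\ref{lemma:mode_insert} only says the pursuer descends on whatever $\phi$ it is handed; it says nothing about one evader-induced $\phi$ being harder to cover than another, nor that non-ergodic evader deviations are strictly dominated. Your route would require an additional minimax argument over the space of evader-induced distributions, which is outside what the paper develops. If you want to match the paper, drop the best-response scaffolding entirely and argue directly from the negativity of the mode insertion gradient that each agent's ergodic action satisfies the Nash inequality to first order in $\lambda$; that is the whole content of the paper's proof. Your version is more ambitious and, were the gap closed, would yield a considerably stronger and more satisfying statement---but as it stands the central step does not go through, whereas the paper's far simpler argument never needs it.
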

\begin{proof}
Consider two agents, $a$ and $b$ on opposing sides of a game.
Agent $a$ is ergodic with respect to a target distribution $\phi_a(s)$. 
Agent $b$ is ergodic with respect to $\phi_b(s)$.
We assume that the target distribution of agent $a$ and $b$ is a function of the state of the agents, that is, $\phi_a(s) = \phi_a(s, x_a(t), x_b(t))$ and  $\phi_b(s) = \phi(s, x_a(t), x_b(t))$.
From Lemma~\ref{lemma:mode_insert}, we have shown that $\frac{\partial \mathcal{E}}{\partial \lambda} < 0 ,\ \forall u(t)$ defined by an ergodic policy.
As a result, as $t \to \infty$, both agents are asymptotically optimally ergodic with respect to their own target distributions so long as each action reduces the ergodic objective.
Therefore, we can write the change in the utility function\textemdash which we define as the ergodic metric\textemdash as
\begin{align*}
\mathcal{U}_i(x(t) \mid u) - \mathcal{U}_i(x(t) \mid u_{-i} ) & = \Delta U_i  \approx \frac{\partial \mathcal{E}_i}{\partial \lambda} \lambda < 0 \\
\mathcal{U}_i(x(t) \mid u) - \mathcal{U}_i(x(t) \mid u_{-i} )  & < 0 \\
\mathcal{U}_i(x(t) \mid u) &< \mathcal{U}_i(x(t) \mid u_{-i} )
\end{align*}

Thus, an ergodic control strategy is a Nash equilibrium strategy.
\end{proof}

This kind of analysis lends some insight towards formally viewing ergodic policies with respect to game theory and with application in general multi-agent games.

In the following section, we provide examples for typical uses of our proposed method for multi-agent area coverage problems and a comparison with an area coverage in corridors and tracking a time-varying distribution.

\section{Ergodic Area Coverage for Multi-Agent Elevation Mapping}
\label{sec:terrain-mapping-using-ergodic-area-coverage}
In this section we illustrate the capabilities of a decentralized ergodic controller for multi-agent area coverage for elevation mapping.
We use this example to show improved area coverage of a decentralized ergodic controller while comparing with a centralized controller for the same task.

\begin{figure}[thpb]
\centering
\framebox{\parbox{3.25in}{
\centering
\includegraphics[scale=1.0]{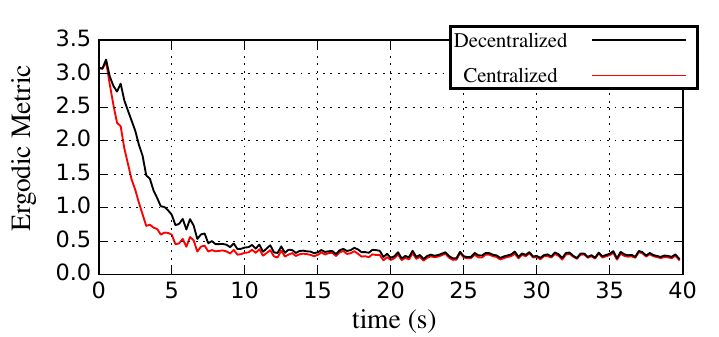}}}
\caption{Comparison of ergodic metric for a decentralized ergodic scheme versus a centralized ergodic scheme. 
Due to communication and consensus amongst the agent in a decentralized scheme, the ergodic metric does not reduce as quickly as a centralized scheme would.
However, the decentralized scheme is quick to reach consensus and performs comparably to the centralized scheme.}
\label{fig:coverage_comparison}
\end{figure}

\begin{figure}[thpb]
\centering
\framebox{\parbox{3.25in}{
\centering
\includegraphics[scale=0.9]{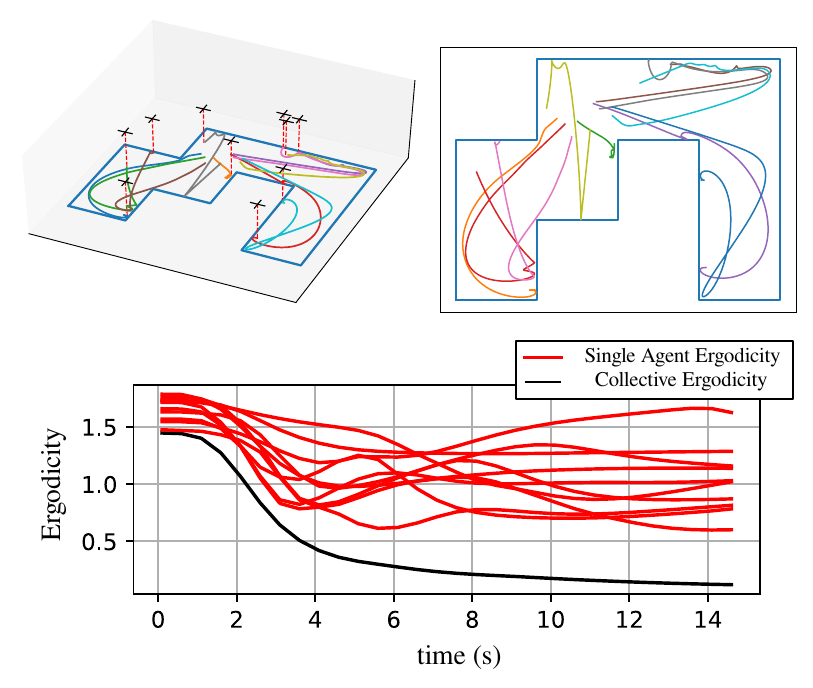}}}
\caption{
Comparison of area coverage in a corridor defined in~\cite{kantaros2015distributed} with small (point-wise) visibility for each agent. 
Ergodic control allows the agents to compensate for the limited sensing through motion while coordinating the decentralized agents to minimize the area coverage objective.
}
\label{fig:corridor_comparison}
\end{figure}

\subsection{Problem Setup}
% Note to self: Find another reference
A $12$ dimensional quadrotor~\cite{martin2010true} is used for the robotic agent dynamics with $4$ inputs directly controlling thrust, yaw, pitch, and roll angular accelerations.
Each agent measures ground elevation relative to the agent's altitude which it uses to construct a model of the terrain.
Three agents are used that are fully connected to one another.
The agents are randomly initialized and a Gaussian Process~\cite{vasudevan2009gaussian, plagemann2008learning} is used to construct the terrain elevation from data collected after $10$ second intervals.

\begin{figure*}[thpb]
\centering
\framebox{\parbox{6.8in}{
\centering
\includegraphics[scale=0.9]{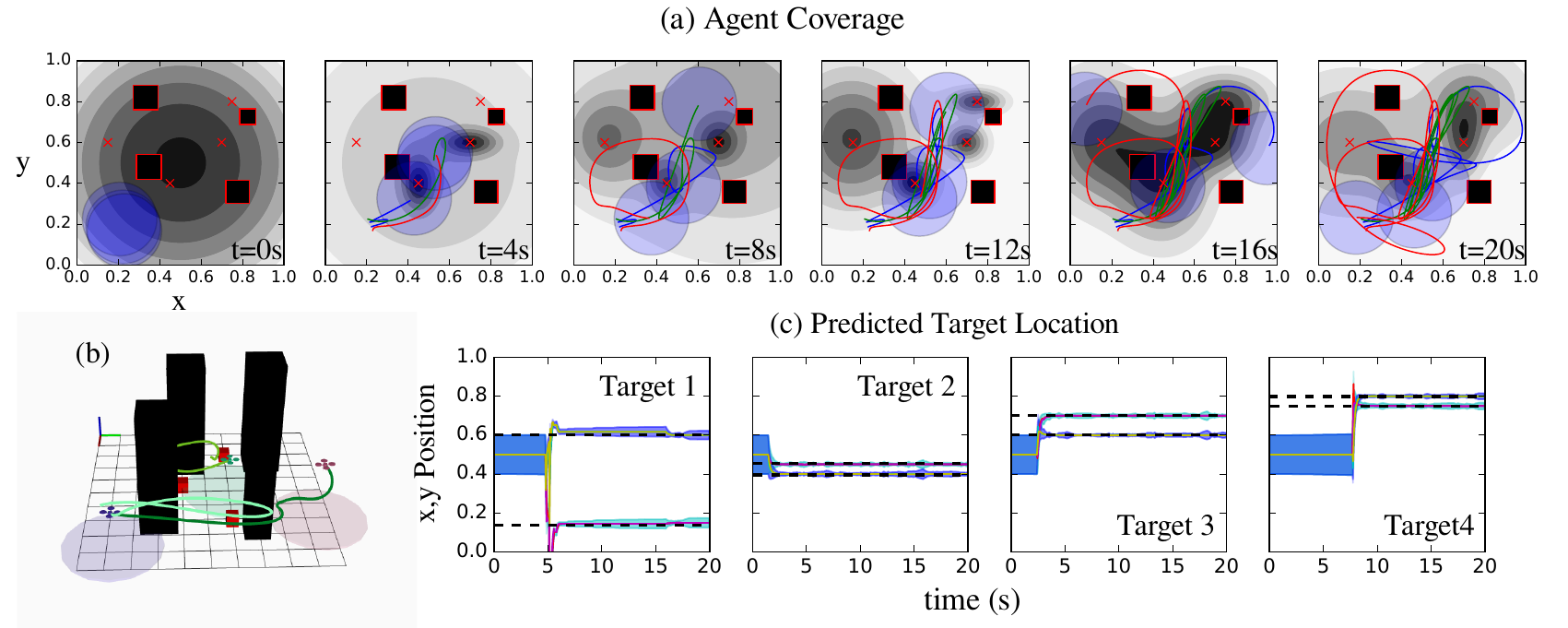}}}
\caption{
Target localization is illustrated using $3$ agents with deterministic quadcopter dynamics and $4$ unknown stationary targets depicted as the red crosses.
(a) Area coverage of the quadcopter is shown as the lines associated with each quadcopter  at distinct times.
The density function underneath shows the likely locations of the targets with darker regions indicating higher likelihood values.
Obstacles are shown as black squares with red outlines. A 3D rendering of the environment is shown in (b). 
Each agent only has a field of range of $0.36$ meters as shown as the transparent blue circles.
Within $10$ seconds the agents under decentralized ergodic policies are able to provide a consensus on the location of the targets. 
(c) Extended Kalman filter values on the location of the target is shown. The black dashed line is the target location ground truth. 
We refer the reader to the multimedia \url{https://youtu.be/Jibt4GLj5sw} for more examples of target localization with a moving target.
}
\label{fig:target_local}
\end{figure*}

\subsection{Results}
Figure~\ref{fig:mapping_results} illustrates the algorithm for area coverage using a network of three decentralized robotic agents.
For comparison purposes, the area coverage of a single agent under the ergodic control policy is shown.
Due to sharing where each agent intends to go and where they have been, the outcome is a more efficient search as each agent chooses the best possible action that reduces the ergodic metric. 
The ergodic control automatically takes into account dynamic constraints and the histories of the other agents in order to allocate where each agent should go in a decentralized fashion.
We see this in Fig.~\ref{fig:mapping_results}(c) where the multi-agent system immediately acquires a good terrain model within the first ten seconds according the error norm on the estimate compared to what the single agent could be capable of accomplishing.

A comparison is presented in Fig.~\ref{fig:coverage_comparison} with respect to the centralized formulation of the algorithm.
Not much performance is lost within the first $5$ seconds of the algorithm when the robotic agents are still trying to achieve a consensus.
After each agent has fulfilled consensus, the decentralized ergodic policy functions minimize the ergodic metric comparably to the centralized version of the algorithm.

We further compare our algorithm with the work done in~\cite{kantaros2015distributed}. 
In~\cite{kantaros2015distributed}, the algorithm uses a visibility constraint which determined the location of the robot with linear dynamics based on the corridor.
We compare to this method using a very small visibility (only the point below the quadcopter) using the decentralized ergodic control scheme.
We present the area coverage problem in Fig.~\ref{fig:corridor_comparison} where we show the corridor used in~\cite{kantaros2015distributed} for area coverage using agents with nonlinear dynamics (quadcopter dynamics defined previously).
The initial positions of the agents were placed as closely as possible to~\cite{kantaros2015distributed}.
Since the visibility constraint is significantly small, this would require the robot to move in order to sufficiently cover the area.
As a result, the work in~\cite{kantaros2015distributed} would not be appropriate in a situation where the dynamics of the robot are needed to compensate for the sensor inefficiencies.
In contrast, our method compensates for the small visibility with motion as shown with the trajectories in Fig.~\ref{fig:corridor_comparison}.

We note in Fig.~\ref{fig:corridor_comparison} that the individual agents' respective ergodicity measures do poorly, whereas the ergodicity measure of the whole system does well.
This illustrates the efficacy of our method to coordinate the decentralized network to successfully minimize the ergodic objective.

\section{Decentralized Ergodic Control for Multi-Agent Target Localization}
\label{sec:ergodic-control-for-multi-agent-pursuit-evasion-games}
In this section, decentralized ergodic control is used for multi-agent target localization.
We use the example of multi-agent target localization because this platform provides us with novel demonstration of the decentralized ergodic control algorithm through a well known robotics problem.

\subsection{Problem Setup}
The goal of target localization is to have the agents locate the target (or targets) in the environment. 
Bearing only sensors~\cite{mavrommatiTRO2017realTime, deans2001experimental} are used for sensing the target with the same three agents as mentioned in Section~\ref{sec:terrain-mapping-using-ergodic-area-coverage} with quadcopter dynamics.
The obstacles are incorporated into the objective with an obstacle avoidance cost which we define by the function $\Theta(x) : \mathbb{R}^n \to \mathbb{R}^+$ which is a direct penalty if the agent goes near an obstacle.
% (see Appendix for more detail on augmenting the controller with the obstacle cost).
In addition, we constrain the radius of the target sensor to $0.38$ meter diameter, thus limiting the total area coverage from the sensor.
The targets are uniformly dispersed throughout the terrain of size $[0,1] \times[0,1]$ \footnote{This is can be easily adjusted in experimentation if the terrain is much larger.} such that they do not intersect with the obstacles.
Targets are localized using an extended Kalman filter (EKF)~\cite{kalman1960new, julier1997new} with sensor noise assumed to be zero mean Gaussian with variance $\sigma^2 = 0.01$. 
The ergodic controller is initialized with a uniform target distribution. 
The prior on the targets is initialized as uniform over the search terrain and a distributed EKF updates the prior for the network system~\cite{carli2008distributed}.
The target distribution is given by the expected information density~\cite{miller2016ergodic, mavrommatiTRO2017realTime} 
\begin{equation*} \label{eq:eid}
\phi(s) = \eta \det \left[ \int_\theta  \frac{\partial \left( \Upsilon(\theta, s) \right) }{\partial \theta}^T \Sigma^{-1} \frac{\partial \left( \Upsilon(\theta, s) \right)}{\partial \theta} p(\theta) d\theta \right]
\end{equation*}
where $\eta$ is a normalization factor, $\Upsilon(\theta, s)$ is the bearing only measurement model parametrized by the position of the targets $\theta$.

\subsection{Results}
Figure~\ref{fig:target_local} illustrates trajectories of the 3 agents localizing 4 targets in the environment. 
In Fig.~\ref{fig:target_local} (a), each each agent chooses a different path that reduces the ergodic measure as well as increases the area coverage.
The agents each localize the targets within the first 10 seconds (as shown in Fig.~\ref{fig:target_local}(c)) while successfully avoiding obstacles (illustrated as the black colored squares).
Here, each agent is solving their own local control problem and only communicating the respective agent's $c_k$ values to the neighboring agents.
The resulting estimate error is within $0.001$ as specified by the Kalman filter and the measurement noise (zero mean Gaussian noise with variance $\sigma^2 = 0.01$).

We provide an additional example with a moving target in the attached multimedia \url{https://youtu.be/Jibt4GLj5sw}.
In this example, we use a particle filter to track the position of the target.
Note that each agent does not share the particle filter information with one another.
Instead, the $\phi_k$ values are also communicated which results in the agents converging over the single target.
As a result, the agents are able to hone in on the target even in the presence of obstacles.

\section{Conclusions}
\label{sec:conclusions}
We present a fully decentralized formulation of ergodic control for multi-agent systems with nonlinear dynamics.
A game theoretic analysis of the algorithm is provided showing the capabilities that the algorithm has on multi-agent games.
Examples of area coverage and target localization illustrate the flexibility of the algorithm for various multi-agent coordination tasks with nonlinear dynamics.
This opens up the possibilities of adapting other multi-agent objectives\textemdash such as pursuit-evasion games\textemdash into a decentralized network.

%\addtolength{\textheight}{-12cm}   % This command serves to balance the column lengths
                                  % on the last page of the document manually. It shortens
                                  % the textheight of the last page by a suitable amount.
                                  % This command does not take effect until the next page
                                  % so it should come on the page before the last. Make
                                  % sure that you do not shorten the textheight too much.

%%%%%%%%%%%%%%%%%%%%%%%%%%%%%%%%%%%%%%%%%%%%%%%%%%%%%%%%%%%%%%%%%%%%%%%%%%%%%%%%

%%%%%%%%%%%%%%%%%%%%%%%%%%%%%%%%%%%%%%%%%%%%%%%%%%%%%%%%%%%%%%%%%%%%%%%%%%%%%%%%

%\clearpage
\balance

\bibliographystyle{IEEEtran}
\bibliography{references}

\end{document}